\theoremstyle{definition}
\newtheorem{defn}{Definition}
\newtheorem{corollary}{Corollary}
  \providecommand\BibTeX{{%
    \normalfont B\kern-0.5em{\scshape i\kern-0.25em b}\kern-0.8em\TeX}}}
\begin{document}
\fancyhead{}

\title{Spatial-Temporal Graph ODE Networks for Traffic Flow Forecasting}

\author{Zheng Fang}
\authornote{These authors contributed equally to the work.}
\affiliation{%
  \institution{Key Laboratory of Machine Perception (Ministry of Education), Peking University}
}
 \email{fang\_z@pku.edu.cn}
 
\author{Qingqing Long}
 \authornotemark[1]
 \authornote{Work performed as a student of Peking University.}
\affiliation{%
  \institution{Alibaba Group}
}
 \email{lantu.lqq@alibaba-inc.com}

\author{Guojie Song}
\authornote{Corresponding Author.}
\affiliation{%
  \institution{Key Laboratory of Machine Perception (Ministry of Education), Peking University}
}
\email{gjsong@pku.edu.cn}

\author{Kunqing Xie}
\affiliation{%
  \institution{Key Laboratory of Machine Perception (Ministry of Education), Peking University}
}
\email{kunqing@cis.pku.edu.cn}

\begin{abstract}
Spatial-temporal forecasting has attracted tremendous attention in a wide range of applications, and traffic flow prediction is a canonical and typical example. The complex and long-range spatial-temporal correlations of traffic flow bring it to a most intractable challenge. Existing works typically utilize shallow graph convolution networks (GNNs) and temporal extracting modules to model spatial and temporal dependencies respectively. However, the representation ability of such models is limited due to: (1) shallow GNNs are incapable to capture long-range spatial correlations, (2) only spatial connections are  considered and a mass of semantic connections are ignored, which are of great importance for a comprehensive understanding of traffic networks. To this end, we propose Spatial-Temporal Graph Ordinary Differential Equation Networks (STGODE).
\footnote{Codes are available at https://github.com/square-coder/STGODE} .
Specifically, we capture spatial-temporal dynamics through a tensor-based ordinary differential equation (ODE), as a result, deeper networks can be constructed and spatial-temporal features are utilized synchronously. To understand the network more comprehensively, semantical adjacency matrix is considered in our model, and a well-design temporal dialated convolution structure is used to capture long term temporal dependencies. We evaluate our model on multiple real-world traffic datasets and superior 
performance is achieved over state-of-the-art baselines.
\end{abstract}

\begin{CCSXML}
<ccs2012>
<concept>
<concept_id>10002951.10003227.10003236</concept_id>
<concept_desc>Information systems~Spatial-temporal systems</concept_desc>
<concept_significance>500</concept_significance>
</concept>
<concept>
<concept_id>10003033.10003083.10003090</concept_id>
<concept_desc>Networks~Network structure</concept_desc>
<concept_significance>300</concept_significance>
</concept>
</ccs2012>
\end{CCSXML}

\ccsdesc[500]{Information systems~Spatial-temporal systems}
\ccsdesc[300]{Networks~Network structure}

\keywords{Spatial Temporal Forecasting; Graph Neural Network; Neural ODE}
\maketitle

\section{Introduction}
Spatial-temporal forecasting has been widely studied in recent years. It has large scale applications in our daily life, such as traffic flow forecasting \cite{guo2019attention, du2017traffic}, climate forecasting \cite{jones2017machine, buizer2016making}, urban monitoring system analysis \cite{longo2017crowd} and so on. For this reason, accurate spatial-temporal forecasting plays a significant role in improving the service quality of these applications. In this paper, we study one of the most representative in spatial-temporal forecasting, traffic flow forecasting, which is an indispensable component in Intelligent Transportation System (ITS). Traffic flow forecasting attempts to predict the future traffic flow given historical traffic conditions and underlying road networks.


This task is challenging principally due to the complex and long-range spatial-temporal dependencies in traffic networks. As an intrinsic phenomenon of traffic, the travel distances of different people vary a lot \cite{plotz2017distribution}, which means that nearby and distant spatial dependencies largely exist at the same time. As Fig \ref{fig:district} shows, a node is not only connected to its geographical neighbors but also distant relevant nodes. Furthermore, traffic flow series exhibit diversified temporal pattern for their distinct behavior attributes as Fig \ref{fig:roads} shows. Moreover, when the spatial attributes and temporal patterns are united, the complex interactions in between leading to an intractable problem for traffic flow forecasting.

\begin{figure}[htbp]
\centering
\subfigure[]{
\includegraphics[width=0.48\linewidth]{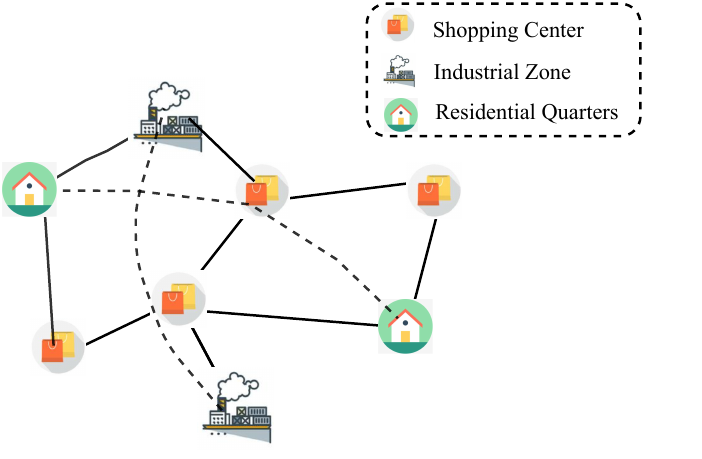}
\label{fig:district}}
\subfigure[]{
\includegraphics[width=0.43\linewidth]{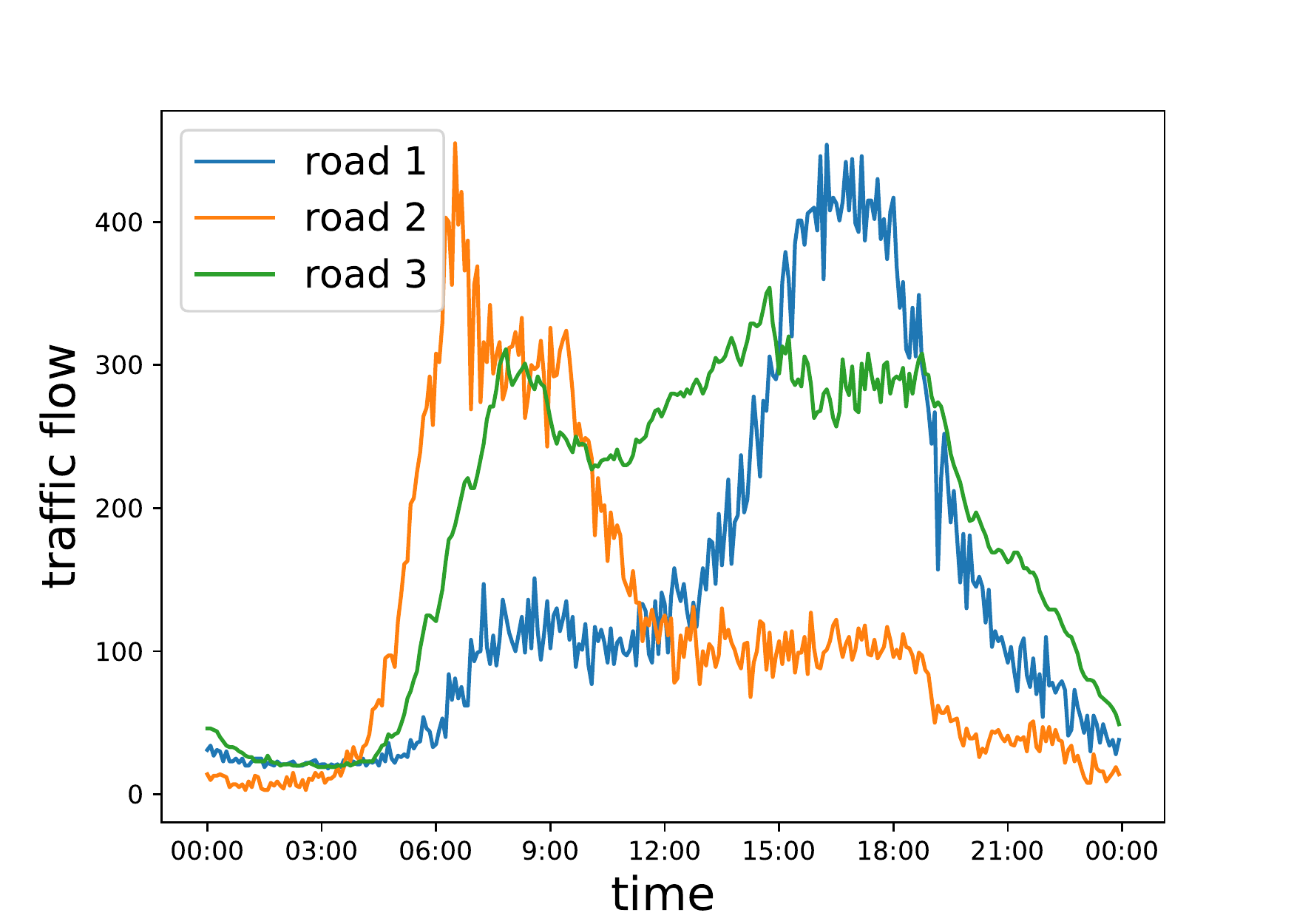}
\label{fig:roads}}
\caption{(a) shows the geographical and semantic connections of nodes. (b) shows examples of traffic flow with diverse patterns, like morning peak, evening peak and relatively steady patterns.}
\end{figure}

Graph Neural Networks (GNNs) for traffic forecasting have attracted tremendous attention in recent years. Owing to its strong ability to deal with graph-structured data, GNN enables to update node representations by aggregating representations from their neighbors, whereby GNN yields effective and efficient performance in various tasks like node classification and graph classification \cite{kipf2016semi, hamilton2017inductive, long2020graph, long2019hierarchical}. A large number of works have been proposed to utilize GNNs to extract spatial features in traffic networks, STGCN \cite{yu2018spatio} and DCRNN \cite{li2018diffusion} are the representative. Most of them combine GNNs with RNNs to obtain spatial representations and temporal representation respectively \cite{zhao2019t, pan2019urban}, and multiple works improve recurrent structure with convolution structure for better training stability and efficiency \cite{zhang2020spatio, fang2019gstnet}.

However, there are two problems that have been persistently neglected. On the one hand, most methods model spatial patterns and temporal patterns separately without considering their interactions, which restricts the representation ability of the models a lot. On the other hand, neural networks generally perform better with the stack of more layers, while GNNs benefit little from the depth. On the contrary, the best results are achieved when two-layer graph neural networks are cascaded, and more layers may lead to inferior performance in practice \cite{zhou2018graph, li2018deeper}. Ordinary GNNs have been proved to suffer from the over-smoothing problem, i.e. all node representations will converge to the same value with deeper layers. Such drawbacks severely limit the depth of GNNs and make it hardly possible to obtain deeper and richer spatial features. However, to the best of our knowledge, there are few works considering network depth in spatial-temporal forecasting, which is of great importance for capturing long-range dependencies.

In our Spatial-Temporal Graph Ordinary Differential Equation Network (STGODE), several components are elaborately designed to tackle the aforementioned problems. First, in order to depict spatial correlations from both geographical and semantic views, we construct two types of adjacency matrices, i.e. spatial adjacency matrix and semantic adjacency matrix, based on spatial connectivity and semantical similarity of traffic flow respectively. Second, motivated by residual networks \cite{he2016deep}, residual connections are added between layers to alleviate the over-smoothing problem.
Furthermore, it is proved that the discrete layers with residual connections can be viewed as a discretization of an Ordinary Differential Equation (ODE) \cite{chen2018neural}, and so a continuous graph neural network (CGNN) is derived \cite{xhonneux2020continuous}. Here in this paper, a continuous GNN with residual connections is introduced to avoid the over-smoothing problem and hence be able to model long-range spatial-temporal dependencies. Last but not least, a spatial-temporal tensor is constructed to consider spatial and temporal patterns simultaneously and model complex spatial-temporal interactions. We present the superiority of our model with a toy example. As Fig \ref{fig:schematic} shows, compared with STGCN, STGODE possesses a wider receptive field and thus can adjust outputs according to shifting circumstances to achieve better performance.
\begin{figure}[ht]
  \centering
  \includegraphics[width=0.75\linewidth]{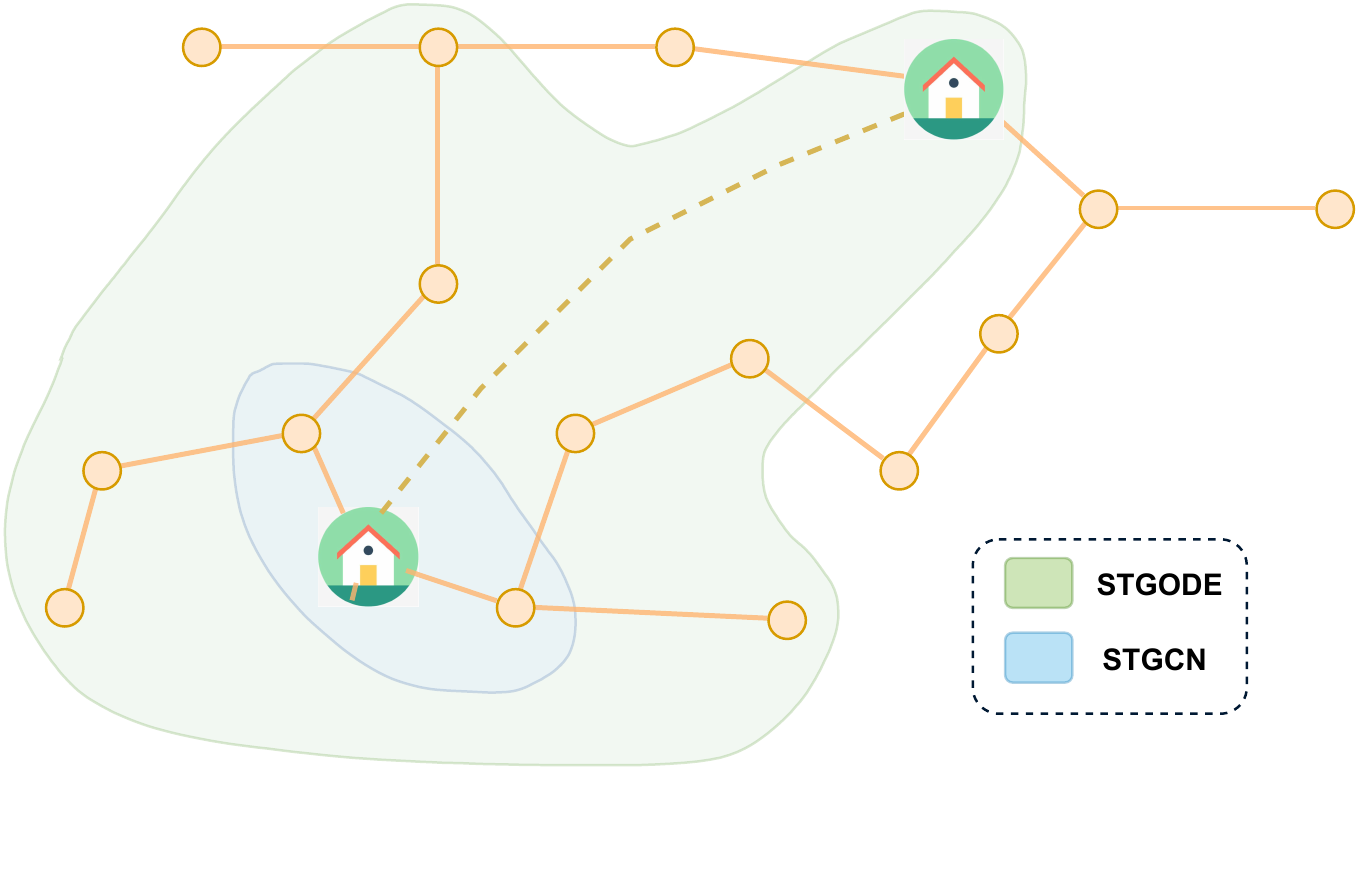}
  \caption{A performance schematic of STGODE}
  \label{fig:schematic}
\end{figure}

Our main contributions are summarized as follows,
\begin{itemize}
  \item We propose a novel continuous representation of GNNs in tensor form for traffic flow forecasting, which breaks through the limit of network depth and improves the capacity of extracting longer-range spatial-temporal correlations, and a theoretical analysis is given in detail.
  \item We utilize both spatial neighbors and semantical neighbors of road nodes to consider spatial correlations comprehensively.
  \item Extensive experiments are conducted on real-world traffic datasets, and the results show that our model outperforms existing baseline models.
\end{itemize}

\section{Related Work}
\subsection{Traffic Flow Forecasting}
In recent years a large body of research has been conducted on traffic flow forecasting, which has always been a critical problem in intelligent transportation systems(ITS)\cite{nagy2018survey}. Traffic flow forecasting can be viewed as a spatial-temporal forecasting task leveraging spatial-temporal data collected by various sensors to predict future traffic conditions. Classic methods, including autoregressive integrated moving average (ARIMA), k-nearest neighbors algorithm (kNN), and support vector machine (SVM), can only take temporal information into account, without considering spatial features.\cite{williams2003modeling,van2012short,jeong2013supervised}. Due to the limitation of modeling complex spatial-temporal relationships with classical methods, deep neural network models are proposed, which have been widely used in various challenging traffic prediction tasks. Specifically, FC-LSTM combines CNN and LSTM to model spatial and temporal relations through an extended fully-connected LSTM with embedded convolutional layers \cite{shi2015convolutional}. ST-ResNet utilizes a deep residual CNN network to predict citywide crowd flow \cite{zhang2017deep}, where the strong power of the residual network is exhibited. Despite impressive results that have been achieved, all above-mentioned methods are designed for grid data, thus not suitable for the traffic scene with graph-structured data.

 \subsection{Graph Neural Networks}
 GNN is an effective framework for the representation learning of graphs. GNNs follow a neighborhood aggregation scheme, where the computation of node representation is carried out by sampling and aggregating features of neighboring nodes \cite{kipf2016semi, hamilton2017inductive,long2021theoretically}. Strenuous efforts have been made to utilize graph convolution methods in traffic forecasting considering that traffic data is a classic kind of non-Euclidean structured graph data. For example, DCRNN \cite{li2018diffusion} view the traffic flow as a diffusion process and captures the spatial dependency with bidirectional random walks on a directed graph. STGCN \cite{yu2018spatio} builds a model with complete convolutional structures on both spatial and temporal view, which enables faster training speed with fewer parameters. ASTGCN \cite{guo2019attention} introduces attention mechanism to capture dynamics of spatial dependencies and temporal correlations. All these methods use two separate components to capture temporal and spatial dependencies respectively instead of simultaneously, thus STSGCN \cite{song2020spatial} makes attempts to incorporate spatial and temporal blocks altogether through an elaborately designed spatial-temporal synchronous modeling mechanism.

 Long-range spatial-temporal relationship, as a common-sense in traffic circumstances, is expected to be explored with deeper neural networks.
 However, the over-smoothing phenomenon of deep GNNs, which has been proved in a great number of studies \cite{zhou2018graph, li2018deeper}, will
 lead to similar node representations.
 Thus the depth of GNNs is restricted, and the long-range dependencies between nodes are largely ignored.

 \subsection{Continuous GNNs}
 Neural Ordinary Differential Equation(ODE) \cite{chen2018neural} models a continuous dynamic system based on parameterizing the derivative of the hidden state using a neural network, instead of specifying discrete sequences of hidden layers. CGNN \cite{xhonneux2020continuous} first extends this method to graph-structured data, which develops a continuous message-passing layer through defining the derivatives as combined representations of current and initial nodes. The key factor for alleviating the over-smoothing effect is the use of restart distribution, which motivates us in this paper. With proving simple GCN as a discretization of a kind of ODE, they characterize the continuous dynamics of node representations and enable deeper networks. To the best of our knowledge, there are no works about graph ODE in spatial-temporal forecasting.

\section{Preliminaries}
\begin{defn}
(Traffic network $\mathcal{G}$) We represent the road network as a graph $\mathcal{G}=(V,E,A)$, where $V$ is a set of $N$ nodes; E is a set of edges; $A\in \mathbb{R}^{N\times N}$ is an adjacency matrix. Here in this paper, two kinds of adjacency matrix are adopted, spatial adjacency matrix $A^{sp}$ and semantic adjacency matrix $A^{se}$.
\end{defn}
\begin{defn}
(Graph signal tensor $\mathcal{X}$) We use $\mathbf{x}_t^i\in \mathbb{R}^{F}$ to denote the observation of node $i$ at time $t$, and $F$ is the length of an observation vector. $X_t=\left(\mathbf{x}_t^1, \mathbf{x}_t^2, \cdots, \mathbf{x}_t^N\right)\in \mathbb{R}^{N\times F}$ denotes the observations of all nodes at time $t$. $\mathcal{X}=\left(X_1,X_2,\cdots,X_T\right) \in \mathbb{R}^{T \times N\times F}$ denotes the observations of all nodes at all time.
\end{defn}

\subsection{Problem Formulation}
Given the tensor $\mathcal{X}$ observed on a traffic network $\mathcal{G}$, the goal of traffic forecasting is to learn a mapping function $f$ from the historical $T$ observations to predict future $T'$ traffic observations,
\begin{equation*}
  \left[ X_{t-T+1}, X_{t-T+2}, \cdots, X_{t}; \mathcal{G}\right] \stackrel{f}{\longrightarrow} \left[ X_{t+1}, X_{t+2}, \cdots, X_{t+T'}\right].
\end{equation*}

\subsection{Regularized adjacency matrix}
Given an adjacency matrix $A\in \mathbb{R}^{N\times N}$, we typically normalize it as $\tilde{A}=D^{-\frac{1}{2}}AD^{-\frac{1}{2}}$, where $D$ is the degree matrix of $A$. $\tilde{A}$ has an eigenvalue decomposition  \cite{chung1997spectral} and the eigenvalues are in the interval
$[-1, 1]$. Negative eigenvalues can lead to unstable training process, thus a self-loop is commonly added to avoid it. The regularized form \cite{kipf2016semi} of $\tilde{A}$ is adopted in this paper:
\begin{equation}\label{2}
  \hat{A}=\frac{\alpha}{2}\left( I+D^{-\frac{1}{2}}AD^{-\frac{1}{2}}\right),
\end{equation}
where $\alpha \in (0,1)$ is a hyperparameter, as a result the eigenvalues of $\hat{A}$ are in the interval $[0, \alpha]$.

\subsection{Neural ODE}
We consider a continuous-time(depth) model,
\begin{equation}\label{3}
  \mathbf{x}(t) = \mathbf{x}(0) + \int_{0}^{t}\frac{\mathrm{d}\mathbf{x}}{\mathrm{d}\tau}\mathrm{d}\tau = \mathbf{x}(0) + \int_{0}^{t}f(\mathbf{x}(\tau),\tau)\mathrm{d}\tau,
\end{equation}
where $f(\mathbf{x}(\tau), \tau)$ will be parameterised by a neural network to model the hidden dynamic. We can backpropagate the process through an ODE solver without any internal operations \cite{chen2018neural}, which allows us to build it just as a block for the whole neural network.

\subsection{Tensor Calculation}
A tensor $\mathcal{T}$ can be viewed as a multidimensional array, and a tensor-matrix multiplication is defined on some mode fiber, for example,
\begin{equation}\label{eq:tensorm}
  \left(\mathcal{T}\times_2 M\right)_{ilk} = \sum_{j=1}^{n_2}\mathcal{T}_{ijk}\cdot M_{jl},
\end{equation}
where $\mathcal{T} \in \mathbb{R}^{n_1\times n_2\times n_3}, M\in \mathbb{R}^{n_2\times n'_2}, \mathcal{T}\times_2 M \in \mathbb{R}^{n_1\times n'_2\times n_3}$, $\times_2$ denotes that the tensor-matrix multiplication is conducted on mode-2, i.e. the second subscript. There are some mathematical properties about tensor-matrix multiplication which will be used in this paper,
\begin{itemize}
  \item $ \mathcal{T}\times_i M_1 \times_i M_2 = \mathcal{T}\times_i (M_1 M_2) $
  \item $ \mathcal{T}\times_i M_1 \times_j M_2 = \mathcal{T}\times_j M_2 \times_i M_1\ (i \neq j).$
\end{itemize}
Above properties can be easily proved with Eq \ref{eq:tensorm} through the multiplication rule.

\section{Model}
Figure \ref{fig:framework} shows the overall framework of our proposed model, i.e. Spatial-Temporal Graph ODE. It mainly consists of three components, two Spatial-Temporal Graph ODE (STGODE) layers composed of multiple STGODE blocks, a max-pooling layer, and an output layer. A STGODE block consists of two temporal dilation convolution (TCN) blocks and a tensor-based ODE solver in between, which is applied to capture complex and long-range spatial-temporal relationships simultaneously. The spatial adjacency matrix and the semantical adjacency matrix will be fed into the solver separately to obtain features from different levels. The details of the model will be described in the following section.

\begin{figure*}[htbp]
  \centering
  \subfigure[Framework]{
  \includegraphics[width=0.3\linewidth, ,trim=50 0 50 0,clip]{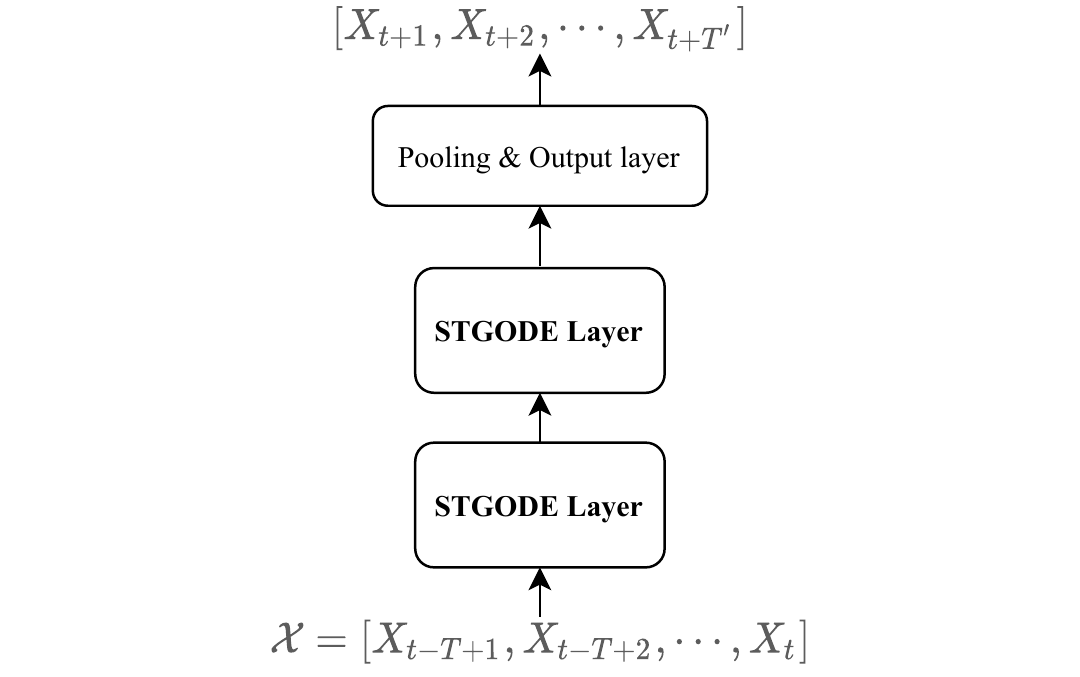}\label{fig:framework}}
  \subfigure[STGODE Layer]{
  \includegraphics[width=0.5\linewidth,clip]{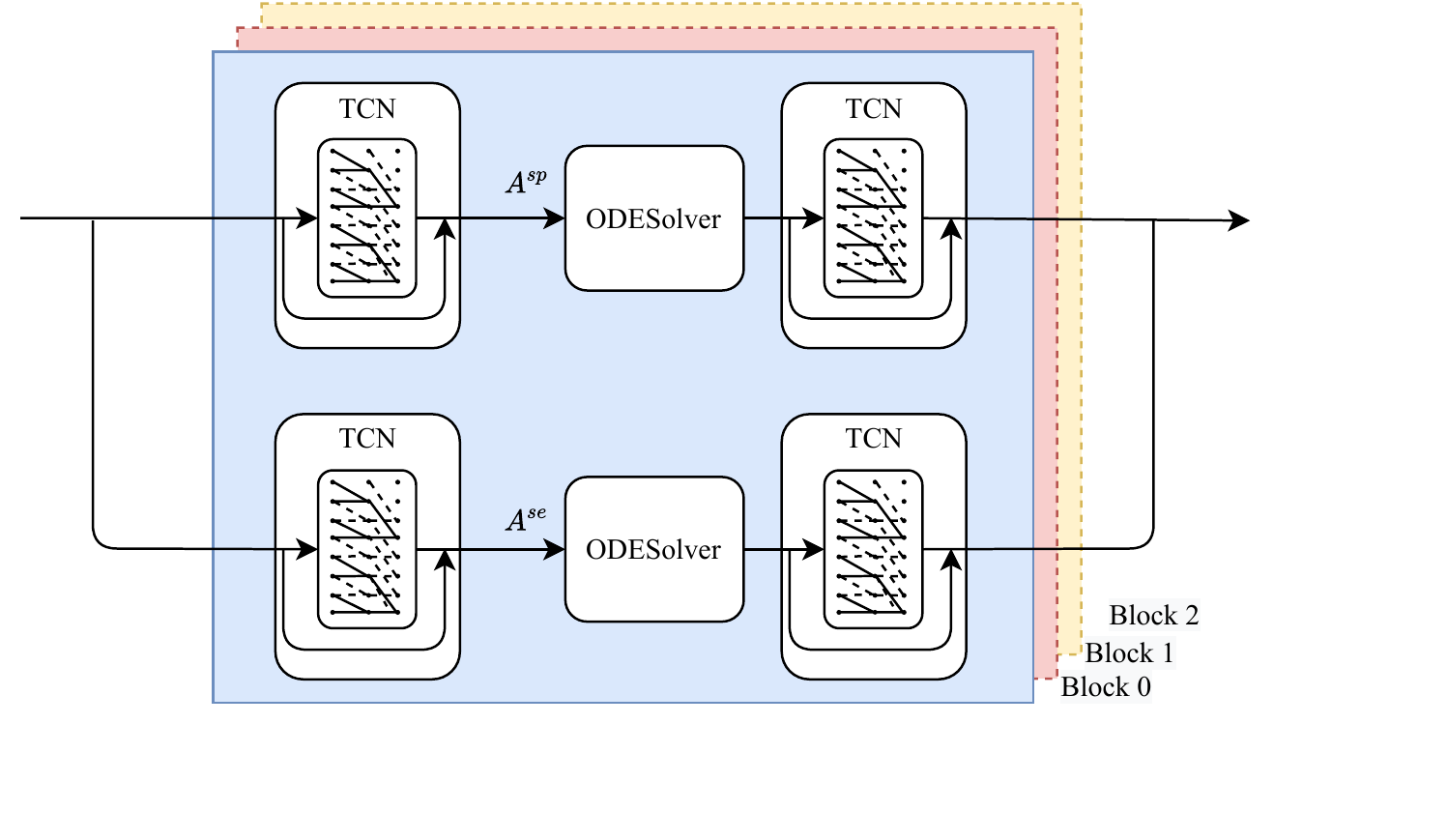}\label{fig:odelayer}}
  \caption{(a) is the framework of the STGODE network. Several STGODE blocks in parallel constitute a STGODE layer, and two STGODE layers are cascaded to extract higher-order features. (b) is the detail of STGODE blocks, where an ODE solver is sandwiched between two TCNs with residual connections and two kinds of adjacency matrices are utilized for more comprehensive characterization.}
\end{figure*}

\subsection{Adjacency Matrix Construction}
Two kinds of adjacency matrix are leveraged in our model. Following STGCN \cite{yu2018spatio}, the spatial adjacency matrix is defined as 
\begin{equation}\label{1}
  A^{sp}_{ij}=\left\{
  \begin{aligned}
    &\exp\left(-\frac{d_{ij}^2}{\sigma ^2}\right) &,\ &\text{if }\exp\left(-\frac{d_{ij}^2}{\sigma ^2}\right) \geq \epsilon \\
    &0 &,\ &\text{otherwise} \\
  \end{aligned}\right. ,
\end{equation}
where $d_{ij}$ is the distance between node $i$ and node $j$. $\sigma^2$ and $\epsilon$ are thresholds to control sparsity of matrix $A^{sp}$.

Besides, contextual similarities between nodes provide a wealth of information and should be taken into consideration. For example, similar traffic patterns are shared among roads near commercial areas regardless of the remote geographical distance, while such correlations cannot be revealed in spatial graph. To capture above mentioned semantic correlations, the Dynamic Time Warping (DTW) algorithm is applied to calculate the similarity of two time series \cite{berndt1994using}, which is superior to other metric methods on account of its sensitivity to shape similarity rather than point-wise similarity. As shown in Fig \ref{fig:dtw}, the point $a$ of series X will be related to the point $c$ but not $b$ of series Y by the DTW algorithm. Specifically, given two time series $X=(x_1, x_2, \cdots, x_m)$ and $Y=(y_1, y_2, \cdots, y_n)$, DTW is a dynamic programming algorithm defined as
\begin{equation}\label{4}
  D(i, j) = dist(x_i,y_j) + \min \left( D(i-1,j),D(i, j-1),D(i-1,j-1)\right),
\end{equation}
where $D(i,j)$ represents the shortest distance between subseries $X=(x_1, x_2, \cdots, x_i)$ and $Y=(y_1, y_2, \cdots, y_j)$, and $dist(x_i,y_j)$ is some distance metric like absolute distance. As a result, $DTW(X,Y)=D(m,n)$ is set as the final distance between $X$ and $Y$, which better reflects the similarity of two time series compared to the Euclidean distance.
\begin{figure}[hbtp]
  \centering
  \includegraphics[width=0.7\linewidth]{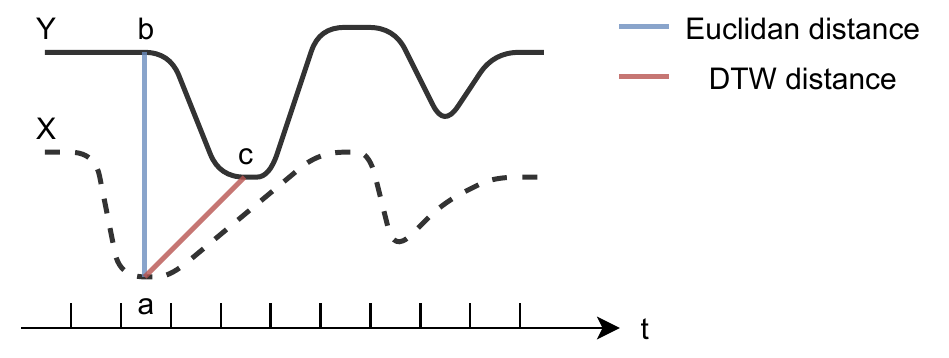}
  \caption{An example of the difference between the Euclidean distance and the DTW distance.}\label{fig:dtw}
\end{figure}

Accordingly, we define the semantic adjacency matrix through the DTW distance as following,
\begin{equation}\label{5}
  A^{SE}_{ij}=\left\{\begin{aligned}
  1, &\ DTW(X^i, X^j) < \epsilon \\
  0, &\ \text{otherwise}
  \end{aligned}
  \right.
\end{equation}
where $X^i$ denotes time series of node $i$, and $\epsilon$ determine the sparsity of the adjacency matrix.

\subsection{Tensor-based Spatial-Temporal Graph ODE}
GNNs update embeddings of nodes through aggregating features of their own and neighbors with a graph convolution operation. The classic form of convolution operation can be formulated as:
\begin{equation}\label{7}
  H_{l+1}=GCN(H_{l})=\sigma(\hat{A}H_{l}W),
\end{equation}
where $H_{l} \in \mathbb{R}^{N\times C}$ denotes the input of the $l$-th graph convolutional layer, $\hat{A} \in \mathbb{R}^{N\times N}$ is the normalized adjacency matrix, and $W\in \mathbb{R}^{C\times C'}$ is a learnable parameter matrix, which models the interaction among different features. However, such GNNs have been proved to suffer from over-smoothing problem when networks go deeper \cite{zhou2018graph, li2018deeper}, which largely limits the capacity of modeling long-range dependencies. For this reason, our STGODE block is proposed.

A discrete version is first shown as:
\begin{equation}\label{eq:discrete}
  \mathcal{H}_{l+1}=\mathcal{H}_l \times_1 \hat{A} \times_2 U \times_3 W + \mathcal{H}_0,
\end{equation}
where $\mathcal{H}_l \in \mathbb{R}^{N\times T \times F}$ is a spatial-temporal tensor representing nodes' hidden embedding of the $l$-th layer, $\times_i$ denotes the tensor-matrix multiplication on mode $i$, $\hat{A}$ is the regularized adjacency matrix, $U$ is the temporal transform matrix, $W$ is the feature transform matrix, and $\mathcal{H}_0$ denotes the initial input of GNN, which can be acquired through another neural network. Different from existing works, we treat the spatial-temporal tensor as input and hence enable to handle spatial information and temporal information simultaneously. The intricate spatial-temporal correlation is coupled through tensor multiplication on each mode. Motivated by CGNN \cite{xhonneux2020continuous}, a restart distribution $\mathcal{H}_0$ is involved to alleviate the over-smoothing problem.

Specifically, the expansion of Eq \ref{eq:discrete} is shown as below,
\begin{equation}\label{eq:expan}
  \mathcal{H}_{l}=\sum_{i=0}^{l}\left(\mathcal{H}_0 \times_1 \hat{A}^i \times_2 U^i \times_3 W^i \right),
\end{equation}
where we can see clearly that the output representation $\mathcal{H}_l$ aggregates information from all layers, that's to say, the final outputs collect information from all no more than $l$-order neighbors without losing initial features. To show the necessity of the restart distribution, let's suppose another version without $\mathcal{H}_0$ like $$\mathcal{H}_{l+1}=\mathcal{H}_l \times_1 \hat{A} \times_2 U \times_3 W,$$
where the final output will be
$$\mathcal{H}_{n}=\mathcal{H}_0 \times_1 \hat{A}^n \times_2 U^n \times_3 W^n.$$
Take $\hat{A}$ as a simple example, assuming $\hat{A}$ has an eigenvalue decomposition as $\hat{A}=P\Lambda P^T$, where $\Lambda = \text{diag}(\lambda_1, \lambda_2, \cdots, \lambda_m)$ is a diagonal matrix. Obviously,
\begin{equation}\label{0}
 \begin{aligned}
  \hat{A}^n &= P\text{diag}\left(\lambda_1^n, \lambda_2^n, \cdots, \lambda_m^n\right)P^T \\
  &= \lambda_1^n P\text{diag}\left(1, (\frac{\lambda_2}{\lambda_1})^n, \cdots, (\frac{\lambda_m}{\lambda_1})^n\right)P^T \\
  &\longrightarrow \lambda_1^n P\text{diag}\left(1, 0, \cdots, 0\right)P^T
\end{aligned}
\end{equation}
when n goes to infinity with $\lambda_1 > \lambda_2 > \cdots \lambda_m$. The diagonal elements converge to zero except the largest one, which causes much loss of information.

Such residual structure as Eq \ref{eq:discrete} is powerful but tough to train due to its enormous amount of parameters, thus we aim to extend the discrete formulation to a continuous expression. Intuitively, we replace $n$ with a continuous variable $t$, and view the expansion equation as a Riemann sum from $0$ to $n$ on $i$, which is,
\begin{equation}\label{11}
  \begin{aligned}
  \mathcal{H}_{n} &= \sum_{i=0}^{n}\left(\mathcal{H}_0 \times_1 \hat{A}^i \times_2 U^i \times_3 W^i \right) \\
  &= \sum_{i=1}^{n+1}\left(\mathcal{H}_0 \times_1 \hat{A}^{(i-1)\times \Delta t} \times_2 U^{(i-1)\times \Delta t} \times_3 W^{(i-1)\times \Delta t} \Delta t \right)
  \end{aligned}
\end{equation}
where $\Delta t = \frac{t+1}{n+1}$ with $t = n$. When n goes to $\infty$, we can formulate the following integral:
\begin{equation}\label{12}
  \mathcal{H}(t) = \int_{0}^{t+1}\mathcal{H}_0\times_1 \hat{A}^{\tau}\times_2 U^{\tau}\times_3 W^{\tau}\mathrm{d}\tau,
\end{equation}

The critical point here is to transform the residual structure to an ODE structure, obviously we already have an ordinary differential equation given by
\begin{equation}\label{eq:der1}
  \frac{\mathrm{d}\mathcal{H}(t)}{\mathrm{d}t}=\mathcal{H}_0\times_1 \hat{A}^{t+1}\times_2 U^{t+1}\times_3 W^{t+1},
\end{equation}
but $\hat{A}^{t+1}, U^{t+1}, W^{t+1}$ are intractable to compute especially when t is a non-integer. Motivated by the work in \cite{xhonneux2020continuous}, we have the following corollary.
\begin{corollary}
  The discrete update in Eq \ref{eq:discrete} is a discretization of following ODE:
\begin{equation}\label{eq:ode}
  \frac{\mathrm{d}\mathcal{H}(t)}{\mathrm{d}t} = \mathcal{H}(t)\times_1 \ln \hat{A} + \mathcal{H}(t)\times_2 \ln U + \mathcal{H}(t)\times_3 \ln W + \mathcal{H}_0,
\end{equation}
where $\mathcal{H}_0=f(\mathcal{X})$ is the output of upstream networks.
\end{corollary}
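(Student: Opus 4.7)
The plan is to start from the integral representation $\mathcal{H}(t)=\int_0^{t+1}\mathcal{F}(\tau)\,\mathrm{d}\tau$ with integrand $\mathcal{F}(\tau)=\mathcal{H}_0\times_1\hat{A}^\tau\times_2 U^\tau\times_3 W^\tau$, already obtained in Eq \ref{12}, verify that this $\mathcal{H}(t)$ satisfies the ODE in Eq \ref{eq:ode}, and then invoke the Riemann-sum identification in Eq \ref{11}, whose limit matches the expansion in Eq \ref{eq:expan}, to conclude that the discrete recursion in Eq \ref{eq:discrete} is precisely a unit-step discretization of that ODE.

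The central computation is differentiating the integrand $\mathcal{F}(\tau)$. Because $\mathrm{d}\hat{A}^\tau/\mathrm{d}\tau=\hat{A}^\tau\ln\hat{A}=\ln\hat{A}\cdot\hat{A}^\tau$, and similarly for $U^\tau, W^\tau$ (valid since $M^\tau$ and $\ln M$ are simultaneously diagonalizable), I would apply the product rule in the three mode factors and then use the two properties of tensor-matrix multiplication from the Preliminaries, namely the associativity $\mathcal{T}\times_i M_1\times_i M_2=\mathcal{T}\times_i(M_1 M_2)$ and the mode commutativity $\mathcal{T}\times_i M_1\times_j M_2=\mathcal{T}\times_j M_2\times_i M_1$ for $i\neq j$, to rewrite each of the three resulting terms in the form $\mathcal{F}(\tau)\times_i(\cdot)$. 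This yields
\begin{equation*}
\frac{\mathrm{d}\mathcal{F}(\tau)}{\mathrm{d}\tau}=\mathcal{F}(\tau)\times_1\ln\hat{A}+\mathcal{F}(\tau)\times_2\ln U+\mathcal{F}(\tau)\times_3\ln W.
\end{equation*}
Integrating both sides from $0$ to $t+1$ and pulling each mode-$i$ multiplication outside the integral by linearity gives
\begin{equation*}
\mathcal{F}(t+1)-\mathcal{F}(0)=\mathcal{H}(t)\times_1\ln\hat{A}+\mathcal{H}(t)\times_2\ln U+\mathcal{H}(t)\times_3\ln W.
\end{equation*}
Finally, $\mathcal{F}(0)=\mathcal{H}_0\times_1 I\times_2 I\times_3 I=\mathcal{H}_0$ and $\mathcal{F}(t+1)=\mathrm{d}\mathcal{H}(t)/\mathrm{d}t$ by Eq \ref{eq:der1}, so rearrangement produces exactly Eq \ref{eq:ode}.

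The main obstacle is ensuring that the matrix logarithms $\ln\hat{A}$, $\ln U$, $\ln W$ and the fractional powers $\hat{A}^\tau,U^\tau,W^\tau$ are well defined and that the derivative identity $\mathrm{d}M^\tau/\mathrm{d}\tau=M^\tau\ln M$ holds on a suitable domain. For $\hat{A}$ this is guaranteed by the regularization in Eq \ref{2}, which places every eigenvalue in $(0,\alpha]$, so the principal logarithm exists and commutes with $\hat{A}^\tau$; for the learnable $U$ and $W$ I would either restrict to the regime where their spectra avoid the non-positive real axis, or, following the CGNN viewpoint of \cite{xhonneux2020continuous}, note that in practice the ODE solver simulates Eq \ref{eq:der1} directly and never evaluates these logarithms. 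A secondary subtlety worth flagging is that ``discretization'' in the corollary refers to identifying Eq \ref{eq:discrete} with the unit-step left Riemann sum in Eq \ref{11}, not with a forward-Euler step of Eq \ref{eq:ode} itself; this is the sense justified by the telescoping chain above.
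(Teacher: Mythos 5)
Your proof is correct and takes essentially the same route as the paper: both differentiate the relation $\mathrm{d}\mathcal{H}(t)/\mathrm{d}t=\mathcal{H}_0\times_1\hat{A}^{t+1}\times_2 U^{t+1}\times_3 W^{t+1}$ using $\mathrm{d}M^{\tau}/\mathrm{d}\tau=M^{\tau}\ln M$ and the mode-product identities, then integrate back and determine the additive constant from the lower endpoint. Your definite integration over $[0,t+1]$ with $\mathcal{F}(0)=\mathcal{H}_0$ is just a cleaner rendering of the paper's ``letting $t\rightarrow-1$'' step, and your extra remarks on the well-definedness of the matrix logarithms supplement rather than change the argument.
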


\begin{proof}
  Starting from Eq \ref{eq:der1}, we consider the second-derivative of $\mathcal{H}(t)$ through derivative rules,
  \begin{align}
    \frac{\mathrm{d}^2\mathcal{H}(t)}{\mathrm{d}t^2}
    =& \frac{\mathrm{d}\mathcal{H}(t)}{\mathrm{d}t}\times_1 \ln \hat{A} + \frac{\mathrm{d}\mathcal{H}(t)}{\mathrm{d}t}\times_2 \ln U + \frac{\mathrm{d}\mathcal{H}(t)}{\mathrm{d}t}\times_3 \ln W
  \end{align}
  Then integrate over $t$ in both sides of the above equation, we can get:
  \begin{equation}\label{eq:der2}
    \frac{\mathrm{d}\mathcal{H}(t)}{\mathrm{d}t} = \mathcal{H}(t)\times_1 \ln \hat{A} + \mathcal{H}(t)\times_2 \ln U + \mathcal{H}(t)\times_3 \ln W + const
  \end{equation}
  To solve the $const$, we can put Eq \ref{eq:der1} and Eq \ref{eq:der2} together, thus we have:
  \begin{align}
  const =& \mathcal{H}_0\times_1 \hat{A}^{t+1}\times_2 U^{t+1}\times_3 W^{t+1} \notag \\
  &- \left( \mathcal{H}(t)\times_1 \ln \hat{A} + \mathcal{H}(t)\times_2 \ln U + \mathcal{H}(t)\times_3 \ln W\right).
  \end{align}
  By letting $t\rightarrow -1$ mathematically, we can easily get $const = \mathcal{H}_0$. So the corollary is proved.
\end{proof}
In practice, we can approximate the logarithm operation with its first order of Taylor expansion, i.e. $\ln M \approx M - I$. As a result, a simpler form is obtained,
\begin{equation}\label{eq:simpleode}
  \frac{\mathrm{d}\mathcal{H}(t)}{\mathrm{d}t} = \mathcal{H}(t)\times_1 (\hat{A}-I) + \mathcal{H}(t)\times_2 (U-I) + \mathcal{H}(t)\times_3 (W-I) + \mathcal{H}_0
\end{equation}

\begin{figure}[htbp]
  \centering
  \subfigure[ODE Solver]{
  \includegraphics[width=0.65\linewidth]{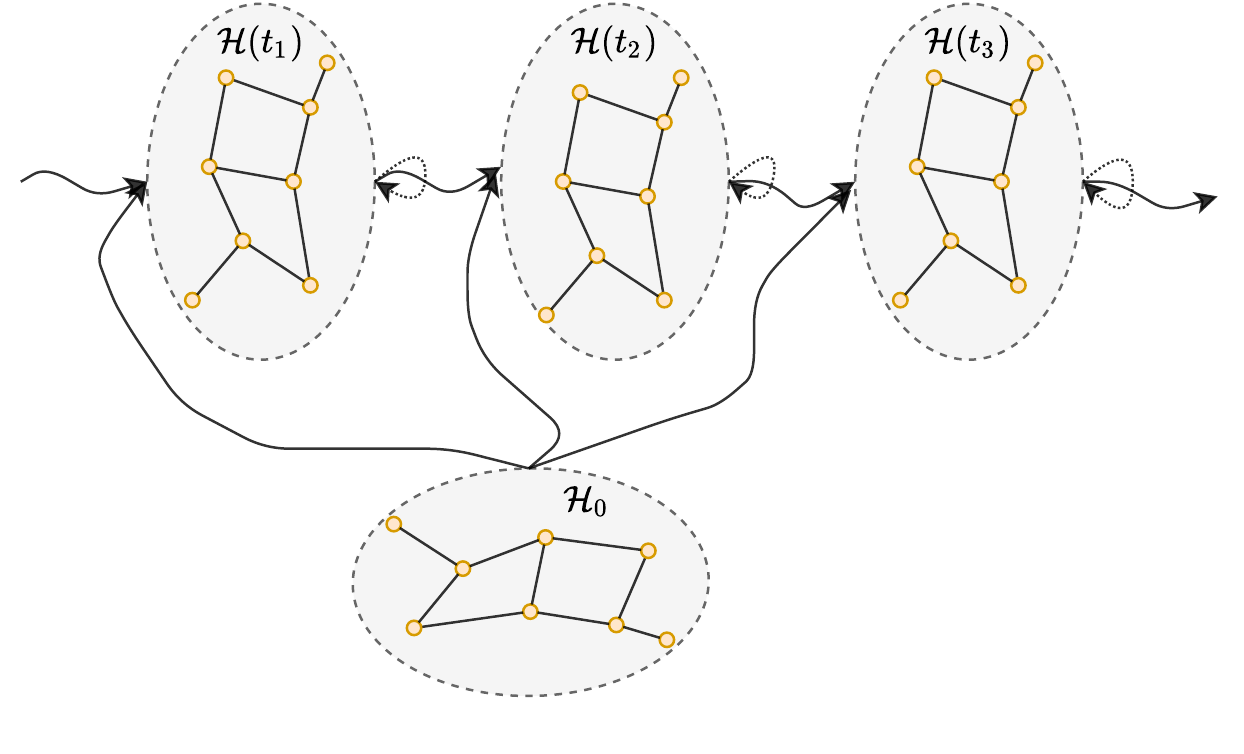}\label{fig:odesolver}}
  \subfigure[TCN]{
  \includegraphics[width=0.3\linewidth]{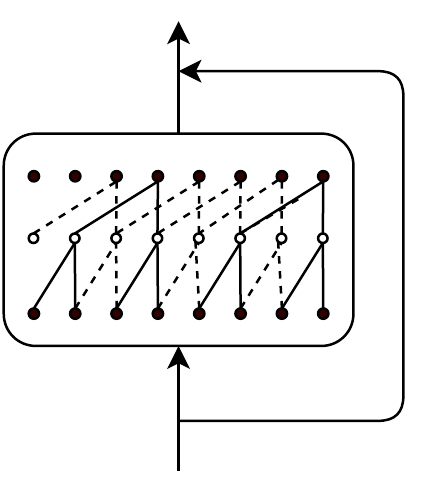}\label{fig:tcn}}
  \caption{(a) is the illustration of a ODE solver, which shows that the derivation of the hidden states is a function of current states and initial states. (b) represents the structure of TCN, which consists of a dilated convolution and a residual connection.}
\end{figure}

The above ODE we deduced can be analytically solved as the following corollary.
\begin{corollary}
  The analytic solution of the Eq \ref{eq:simpleode} is given by
  \begin{equation}\label{eq:intergration}
  \begin{aligned}
    \mathcal{H}(t) =& \mathcal{H}_0 \times_1 e^{(\hat{A}-I)t}\times_2 e^{(U-I)t} \times_3 e^{(W-I)t} \\
    &+ \int_{0}^{t}\mathcal{H}_0\times_1 e^{(\hat{A}-I)(t-s)}\times_2 e^{(U-I)(t-s)} \times_3 e^{(W-I)(t-s)}\mathrm{d}s
  \end{aligned}
  \end{equation}
\end{corollary}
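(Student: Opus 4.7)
The plan is to treat Eq \ref{eq:simpleode} as a linear inhomogeneous ODE for a tensor-valued function and solve it via the tensor analogue of the scalar integrating-factor formula $y(t)=e^{at}y(0)+\int_0^t e^{a(t-s)}b\,ds$. The key conceptual move is to recognize the right-hand side, minus the constant forcing term $\mathcal{H}_0$, as a single linear operator $\mathcal{L}$ acting on $\mathcal{H}(t)$, where
\[
\mathcal{L}(\mathcal{H}) \;=\; \mathcal{H}\times_1(\hat{A}-I) + \mathcal{H}\times_2(U-I) + \mathcal{H}\times_3(W-I),
\]
so the ODE reads $\mathcal{H}'(t)=\mathcal{L}(\mathcal{H}(t))+\mathcal{H}_0$, with initial condition $\mathcal{H}(0)=\mathcal{H}_0$ inherited from the upstream network.

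First I would record two algebraic facts derived from the tensor identities stated in the Preliminaries. (i) The three componentwise operators $L_i:\mathcal{H}\mapsto \mathcal{H}\times_i(M_i-I)$, with $M_1=\hat{A}$, $M_2=U$, $M_3=W$, pairwise commute, because mode-$i$ and mode-$j$ multiplications commute for $i\ne j$. (ii) Repeated application in the same mode collapses: $L_i^k(\mathcal{H})=\mathcal{H}\times_i (M_i-I)^k$ by the first identity $\mathcal{T}\times_i M_1\times_i M_2=\mathcal{T}\times_i(M_1M_2)$. Combining (i) and (ii) via the exponential series gives the factorization
\[
e^{t\mathcal{L}}(\mathcal{H}) \;=\; \mathcal{H}\times_1 e^{(\hat{A}-I)t}\times_2 e^{(U-I)t}\times_3 e^{(W-I)t},
\]
which is the single crucial algebraic lemma underlying the statement.

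Next I would apply the standard variation-of-constants argument: multiply both sides of $\mathcal{H}'(t)-\mathcal{L}(\mathcal{H}(t))=\mathcal{H}_0$ by the integrating factor $e^{-t\mathcal{L}}$, observe that the left-hand side becomes $\tfrac{d}{dt}\bigl(e^{-t\mathcal{L}}\mathcal{H}(t)\bigr)$, integrate from $0$ to $t$, and solve for $\mathcal{H}(t)$. This yields
\[
\mathcal{H}(t) \;=\; e^{t\mathcal{L}}\mathcal{H}(0) + \int_{0}^{t} e^{(t-s)\mathcal{L}}\mathcal{H}_0 \, ds,
\]
and substituting the explicit factorization from the previous paragraph together with $\mathcal{H}(0)=\mathcal{H}_0$ produces exactly Eq \ref{eq:intergration}. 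I would close by verifying directly that the proposed formula satisfies both the ODE and the initial condition (the initial condition is immediate at $t=0$, and differentiation under the integral plus the factorization lemma recovers Eq \ref{eq:simpleode}), so the solution is unique by standard linear ODE theory.

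The main obstacle I anticipate is not the variation-of-constants step, which is textbook, but cleanly justifying the factorization of $e^{t\mathcal{L}}$ across the three modes. One must argue that the mode-$i$ operators commute as linear maps on the tensor space and that the operator exponential series converges absolutely so that rearranging it as a product of three commuting exponentials is legitimate; the payoff is that each factor in the product becomes a mode-$i$ tensor-matrix multiplication by the ordinary matrix exponential of $(M_i-I)t$, which is what makes the closed form in Eq \ref{eq:intergration} meaningful and computable.
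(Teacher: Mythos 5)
Your proposal is correct and follows essentially the same route as the paper: both solve Eq \ref{eq:simpleode} by an integrating-factor / variation-of-constants argument in which the solution operator factors across the three modes as $\times_1 e^{(\hat{A}-I)t}\times_2 e^{(U-I)t}\times_3 e^{(W-I)t}$, using the commutativity of tensor-matrix multiplications on distinct modes. Your write-up is in fact a bit cleaner than the paper's, since you make the factorization $e^{t\mathcal{L}}$ of the mode-wise exponentials an explicit lemma and use the correctly signed integrating factor $e^{-t\mathcal{L}}$, whereas the paper's auxiliary $\mathcal{H}^*(t)$ is written with the opposite sign in the exponents.
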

\begin{proof}
  Suppose
  \begin{equation}
    \mathcal{H}^*(t) = \mathcal{H}(t)\times_1 e^{(\hat{A}-I)t}\times_2 e^{(U-I)t} \times_3 e^{(W-I)t},
  \end{equation}
  then we have
  \begin{align} \label{eq:hstar}
    \frac{\mathrm{d}\mathcal{H}^*(t)}{\mathrm{d}t}
     =& \mathcal{H}_0 \times_1 e^{(\hat{A}-I)t}\times_2 e^{(U-I)t} \times_3 e^{(W-I)t},
  \end{align}
  and this is derived from  Eq \ref{eq:simpleode}. Integrate Eq \ref{eq:hstar} on both sides, and we can get the following result,
  \begin{equation}\label{32}
    \mathcal{H}^*(t) = \mathcal{H}_0^* + \int_{0}^{t}\mathcal{H}_0\times_1 e^{(\hat{A}-I)\tau}\times_2 e^{(U-I)\tau} \times_3 e^{(W-I)\tau}\mathrm{d}\tau,
  \end{equation}
  and hence $\mathcal{H}(t)$ can be formulated as
  \begin{equation}\label{inter}
  \begin{aligned}
    \mathcal{H}(t) =& \mathcal{H}_0\times_1 e^{(\hat{A}-I)t}\times_2 e^{(U-I)t} \times_3 e^{(W-I)t} \\
    &+ \int_{0}^{t}\mathcal{H}_0\times_1 e^{(\hat{A}-I)(t-\tau)}\times_2 e^{(U-I)(t-\tau)} \times_3 e^{(W-I)(t-\tau)}\mathrm{d}\tau
  \end{aligned}
  \end{equation}
\end{proof}
In fact, the last integration can be solved further, but limited by space, we put it in the supplementary.
Notice that the eigenvalues of $\hat{A}-I$ is in the interval $[-1, 0)$, as a result, $e^{(\hat{A}-I)t}$ will go to zero when t goes to $\infty$. However, unrestricted $U$ and $W$ will lead to divergent integrations as t goes to $\infty$. To enforce $U$ and $W$ to be a diagonalizable matrix with all the eigenvalues less than 1, we follow previous work \cite{cisse2017parseval} to parameterise $U$ and $W$ as $U=Pdiag(\lambda)P^T$ and $W=Qdiag(\mu)Q^T$ respectively, where $P$ and $Q$ are learnable orthogonal matrices, $\lambda$ and $\mu$ are learnable vectors whose elements will be clamped to the interval $(0,1)$.

So far, we have proved a continuous form of tensor-based hidden representation theoretically. Motivated by Neural ODE \cite{chen2018neural}, we propose an STGODE learning framework,
\begin{equation}\label{stgode}
  \mathcal{H}(t) = ODESolve\left(\frac{\mathrm{d} \mathcal{H}(t)}{\mathrm{d} t}, \mathcal{H}_0, t\right),
\end{equation}
where $$\frac{\mathrm{d}\mathcal{H}(t)}{\mathrm{d}t} = \mathcal{H}(t)\times_1 (\hat{A}-I) + \mathcal{H}(t)\times_2 (U-I) + \mathcal{H}(t)\times_3 (W-I) + \mathcal{H}_0,$$
$\mathcal{H}_0$ denotes the initial value, which comes from the upstream network and the ODESolver is chosen as the Euler solver in our model.

\subsection{Temporal Convolutional Blocks}
Besides spatial correlations among different nodes, the long-term temporal correlations of the nodes themselves also matter.
Although RNN-based models, like LSTM and GRU, are widely applied in time-series analysis, recurrent networks still suffer from some intrinsic drawbacks like time-consuming iterations, unstable gradients, and delayed responses to dynamic changes.

To enhance the performance of extracting long term temporal dependencies, a 1-D dilated temporal convolutional network along the time axis is adopted here.
\begin{equation}
  H_{tcn}^{l} = \left\{ \begin{aligned}
  &X&,\ & l=0 \\
  &\sigma(W^l *_{d^l} H_{tcn}^{l-1})&,\ & l=1,2,\cdots,L
  \end{aligned}
  \right.
\end{equation}
where $X\in \mathbb{R}^{N\times T\times F}$ is the input of TCN, $H_{tcn}^l\in \mathbb{R}^{N\times T\times F}$ is the output of the $l$-th layer of TCN,  and $W^{l}$ denotes the $l$-th convolution kernel. To expand the receptive field , an exponential dilation rate $d^l=2^{l-1}$ is adopted in temporal convolution. In the process, zero-padding strategy is utilized to keep time sequence length unchanged. What's more, a residual structure \cite{he2016deep} is added to strengthen convolution performance as shown in Fig \ref{fig:tcn}.

\subsection{STGODE Layer}
In this part, the overall STGODE layer is presented in detail. As illustrated in Fig \ref{fig:odelayer}, the "sandwich" structure is adopted which consist of two TCN blocks and a STGODE solver. Such structure enables flexible and sensible spatial-temporal information flows, and all-convolution structures have the superiority of fast training and parallelization. Stacked "sandwiches" further extend the model's ability to discover complex correlations.

In the construction of the model, we deploy two kinds of STGODE blocks, which accept different adjacency matrices, i.e. the spatial adjacency matrix and the semantic adjacency matrix. Two kinds of adjacency matrices are utilized to combine local dynamics and semantical relationships altogether, which greatly enhance the representation ability. Multiple blocks are deployed in parallel so that more complicated and multi-level correlations can be captured.

\subsection{Others}
After the STGODE layers, a max-pooling operation is carried out to aggregate information from different blocks selectively. Finally, a two-layer MLP is designed as the output layer to transform the output of the max-pooling layer to the final prediction.

Huber loss is selected as the loss function since it is less sensitive to outliers than the squared error loss \cite{huber1992robust},
\begin{equation}\label{eq:huber}
  L(Y, \hat{Y})=\left\{
  \begin{aligned}
  &\frac{1}{2}(Y-\hat{Y})^2 &,\ & |Y-\hat{Y}|\leq \delta \\
  &\delta |Y-\hat{Y}| - \frac{1}{2}\delta^2&,\ & \text{otherwise}\\
  \end{aligned}
  \right.
\end{equation}
where $\delta$ is a hyperparameter which controls the sensitivity to outliers.

\section{Experiments}
\subsection{Datasets}
We verify the performance of STGODE on six real-world traffic datasets, PeMSD7(M), PeMSD7(L), PeMS03, PeMS04, PeMS07, and PeMS08, which are collected by the Caltrans Performance Measurement System(PeMS) in real time every 30 seconds\cite{chen2001freeway}. The traffic data are aggregated into 5-minutes intervals, which means there are 288 time steps in the traffic flow for one day. The system has more than 39,000 detectors deployed on the highway in the major metropolitan areas in California. There are three kinds of traffic measurements contained in the raw data, including traffic flow, average speed, and average occupancy.

Specifically, PeMSD3 has 358 sensors, and the time span of it is from September to November in 2018, including 91 days in total. PeMSD7(M) and PeMSD7(L) are two datasets selected from District 7 of California, which contains 288 and 1,026 sensors respectively. The time range of PeMSD7 is in the weekdays of May and June of 2012. And PeMSD8 is collected from July to August in 2016, which contains 170 sensors. The detail of datasets is listed in Table \ref{tab:dataset}. Z-score normalization is applied to the input data, i.e. removing the mean and scaling to unit variance.
\begin{table}[htbp]
  \centering
    \begin{tabular}{cccc}
    \hline
    Datasets & \#Sensors & \#Edges & Time Steps  \\ \hline
    PeMSD7(M) & 228  &1132  & 12672 \\
    PeMSD7(L) & 1026 &10150 & 12672 \\
    PeMS03    & 358  &547   & 26208 \\
    PeMS04    & 307  &340   & 16992 \\
    PeMS07    & 883  &866   & 28224 \\
    PeMS08    & 170  &295   & 17856 \\ \hline
    \end{tabular}%
  \caption{Datasets description}
  \label{tab:dataset}%
\end{table}%

\subsection{Baselines}
We compare STODE with following baseline models:
\begin{itemize}
  \item \textbf{ARIMA} \cite{box1970distribution}: Auto-Regressive Integrated Moving Average model, which is a well-known statistical model of time series analysis.
  \item \textbf{STGCN} \cite{yu2018spatio}: Spatio-Temporal Graph Convolution Network, which utilizes graph convolution and 1D convolution to capture spatial dependencies and temporal correlations respectively.
  \item \textbf{DCRNN} \cite{li2018diffusion}: Diffusion Convolution Recurrent Neural Network, which integrates graph convolution into an encoder-decoder gated recurrent unit.
  \item \textbf{GraphWaveNet} \cite{wu2019graph}: Graph WaveNet, which combines adaptive graph convolution with dilated casual convolution to capture spatial-temporal dependencies.
  \item \textbf{ASTGCN(r)} \cite{guo2019attention}: Attention based Spatial Temporal Graph Convolutional Networks, which utilize spatial and temporal attention mechanisms to model spatial-temporal dynamics respectively. In order to keep the fairness of comparison, only recent components of modeling periodicity are taken.
  \item \textbf{STSGCN} \cite{song2020spatial}: Spatial-Temporal Graph Synchronous Graph Convolutional Networks, which utilize multiple localized spatial-temporal subgraph modules to synchronously capture the localized spatial-temporal correlations directly.
  \end{itemize}

\subsection{Experimental Settings}
We split all datasets with a ratio 6: 2: 2 into training sets, validation sets, and test sets. One hour of historical data is used to predict traffic conditions in the next 60 minutes.

All experiments are conducted on a Linux server(CPU: Intel(R) Xeon(R) CPU E5-2682 v4 @ 2.50GHz, GPU: NVIDIA TESLA V100 16GB). The hidden dimensions of TCN blocks are set to 64, 32, 64, and 3 STGODE blocks are contained in each layer. The regularized hyperparameter $\alpha$ is set to 0.8, the thresholds $\sigma$ and $\epsilon$ of the spatial adjacency matrix are set to 10 and 0.5 respectively, and the threshold $\epsilon$ of the semantic adjacency matrix is set to 0.6.

We train our model using Adam optimizer with a learning rate of 0.01. The batch size is 32 and the training epoch is 200. Three kinds of evaluation metrics are adopted, including root mean squared errors(RMSE), mean absolute errors(MAE), and mean absolute percentage errors(MAPE).

\begin{table*}[htbp]
  \centering
    \begin{tabular}{ccccccccc}
    \toprule
    Dataset & Metric & ARIMA & STGCN & DCRNN & ASTGCN(r) & GraphWaveNet & STSGCN & \textbf{STODE} \\
    \midrule
          & RMSE  & 13.20 & 7.55  & 7.18  & 6.87  & 6.24  & 5.93  & \textbf{5.66} \\
    PeMSD7(M) & MAE   & 7.27  & 4.01  & 3.83  & 3.61  & 3.19  & 3.01  & \textbf{2.97} \\
          & MAPE  & 10.38 & 9.67  & 9.81  & 8.84  & 8.02  & 7.55  & \textbf{7.36} \\
    \midrule
          & RMSE  & 12.39 & 8.28  & 8.33  & 7.64  & 7.09  & 6.88  & \textbf{5.98} \\
    PeMSD7(L) & MAE   & 7.51  & 4.84  & 4.33  & 4.09  & 3.75  & 3.61  & \textbf{3.22} \\
          & MAPE  & 15.83 & 11.76 & 11.41 & 10.25 & 9.41  & 9.13  & \textbf{7.94} \\
    \midrule
          & RMSE  & 47.59 & 30.42 & 30.31 & 29.56 & 32.77 & 29.21 & \textbf{27.84} \\
    PeMS03 & MAE   & 35.41 & 17.55 & 17.99 & 17.34 & 19.12 & 17.48 & \textbf{16.50} \\
          & MAPE  & 33.78 & 17.43 & 18.34 & 17.21 & 18.89 & 16.78 & \textbf{16.69} \\
    \midrule
          & RMSE  & 48.80 & 36.01 & 37.65 & 35.22 & 39.66 & 33.65 & \textbf{32.82} \\
    PeMS04 & MAE   & 33.73 & 22.66 & 24.63 & 22.94 & 24.89 & 21.19 & \textbf{20.84} \\
          & MAPE  & 24.18 & 14.34 & 17.01 & 16.43 & 17.29 & 13.90 & \textbf{13.77} \\
    \midrule
          & RMSE  & 59.27 & 39.34 & 38.61 & 37.87 & 41.50 & 39.03 & \textbf{37.54} \\
    PeMS07 & MAE   & 38.17 & 25.33 & 25.22 & 24.01 & 26.39 & 24.26 & \textbf{22.99} \\
          & MAPE  & 19.46 & 11.21 & 11.82 & 10.73 & 11.97 & 10.21 & \textbf{10.14} \\
    \midrule
          & RMSE  & 44.32 & 27.88 & 27.83 & 26.22 & 30.04 & 26.80 & \textbf{25.97} \\
    PeMS08 & MAE   & 31.09 & 18.11 & 17.46 & 16.64 & 18.28 & 17.13 & \textbf{16.81} \\
          & MAPE  & 22.73 & 11.34 & 11.39 & 10.6  & 12.15 & 10.96 &  \textbf{10.62} \\
    \bottomrule
    \end{tabular}%
  \caption{Performance comparison of baseline models and STGODE on PeMS datasets.}
\label{tab:result}
\end{table*}%

\subsection{Experimental Results and Analysis}


Table \ref{tab:result} shows the results of our and competitive models for traffic flow forecasting. Our STGODE model is obviously superior to the baselines.
Specifically, deep learning methods achieve better results than traditional statistical methods, as traditional methods like ARIMA only take temporal correlations into consideration and ignore spatial dependencies, whereas deep learning models can take advantage of spatial-temporal information. Among the deep learning baselines, all except STSGCN utilize two modules to model spatial dependencies and temporal correlations respectively, which overlook complex interactions between spatial information and temporal information, and STSGCN hence surpasses other models. But STSGCN only concentrates on localized spatial-temporal correlations, and turns turtle in global dependencies.

Our model yields the best performance regarding all the metrics for all datasets, which suggests the effectiveness of our spatial-temporal dependency modeling. The result can be attributed to three aspects:
\begin{enumerate}
  \item We utilize a tensor-based ODE framework to extract longer-range spatial-temporal dependencies;
  \item The semantical neighbors are introduced to establish global and comprehensive spatial relationships;
  \item Temporal dilated convolution networks with residual connections help to capture long term temporal dependencies.
\end{enumerate}

\subsection{Case Study}
Here we select two nodes from the road network to carry out a case study. As Fig \ref{fig:case} shows, the prediction results of STGODE are remarkably closer to the ground truth than STGCN \cite{yu2018spatio}. In normal circumstances, the model generates a smooth prediction ignoring small oscillations to fight against noise. But when an abrupt change arises, our model enables a rapid response to it. This is because STGODE is able to utilize feature information from longer range geographical neighbors and semantic neighbors, which helps to accurately capture real-time dynamics and filter invalid information, while STGCN as a shallow network, is susceptible to few nearby neighbors and thus performs unstably. 
\begin{figure}[ht]
  \centering
  \subfigure[]{
  \includegraphics[width=0.45\linewidth]{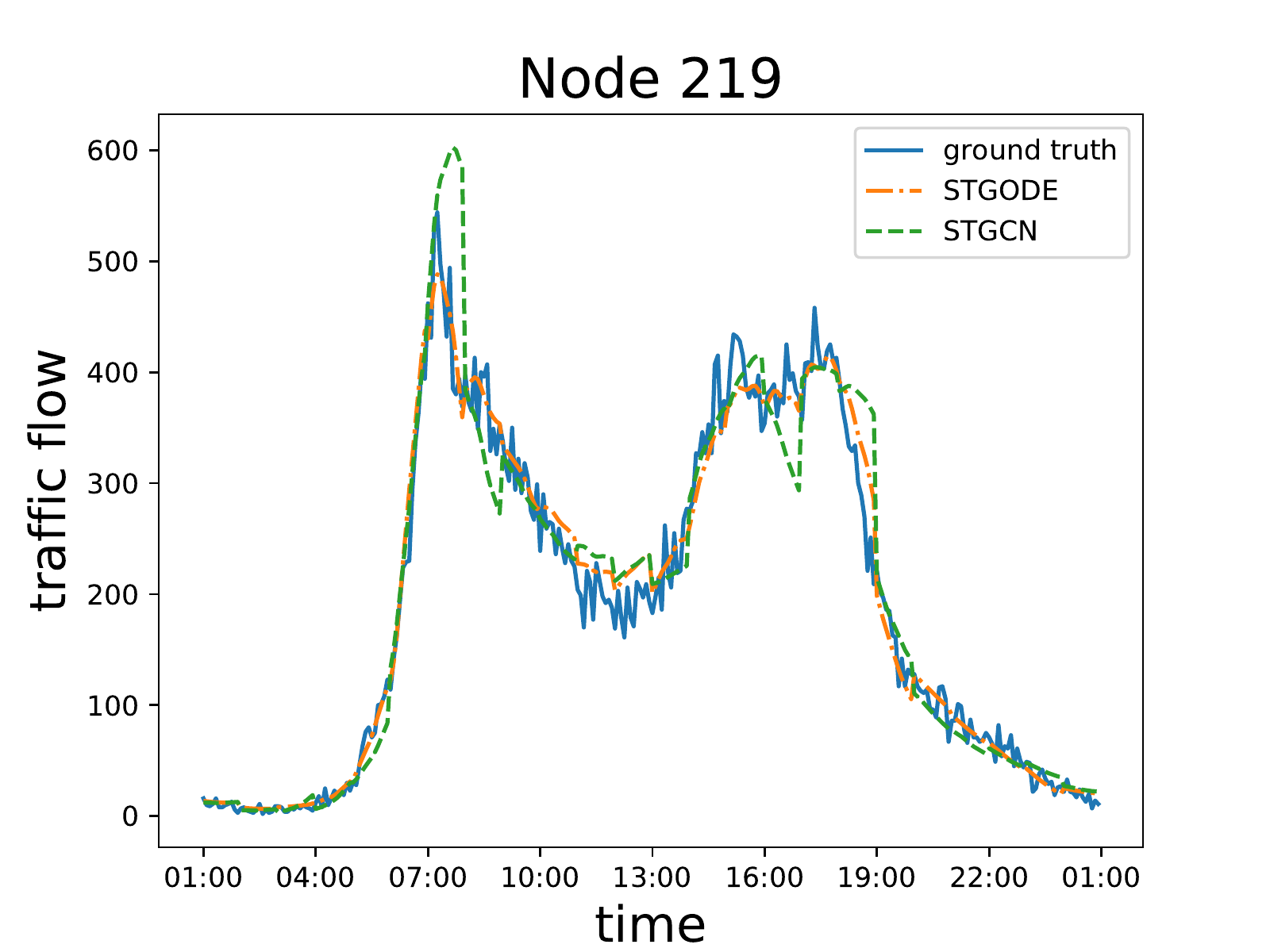}}
  \subfigure[]{
  \includegraphics[width=0.45\linewidth]{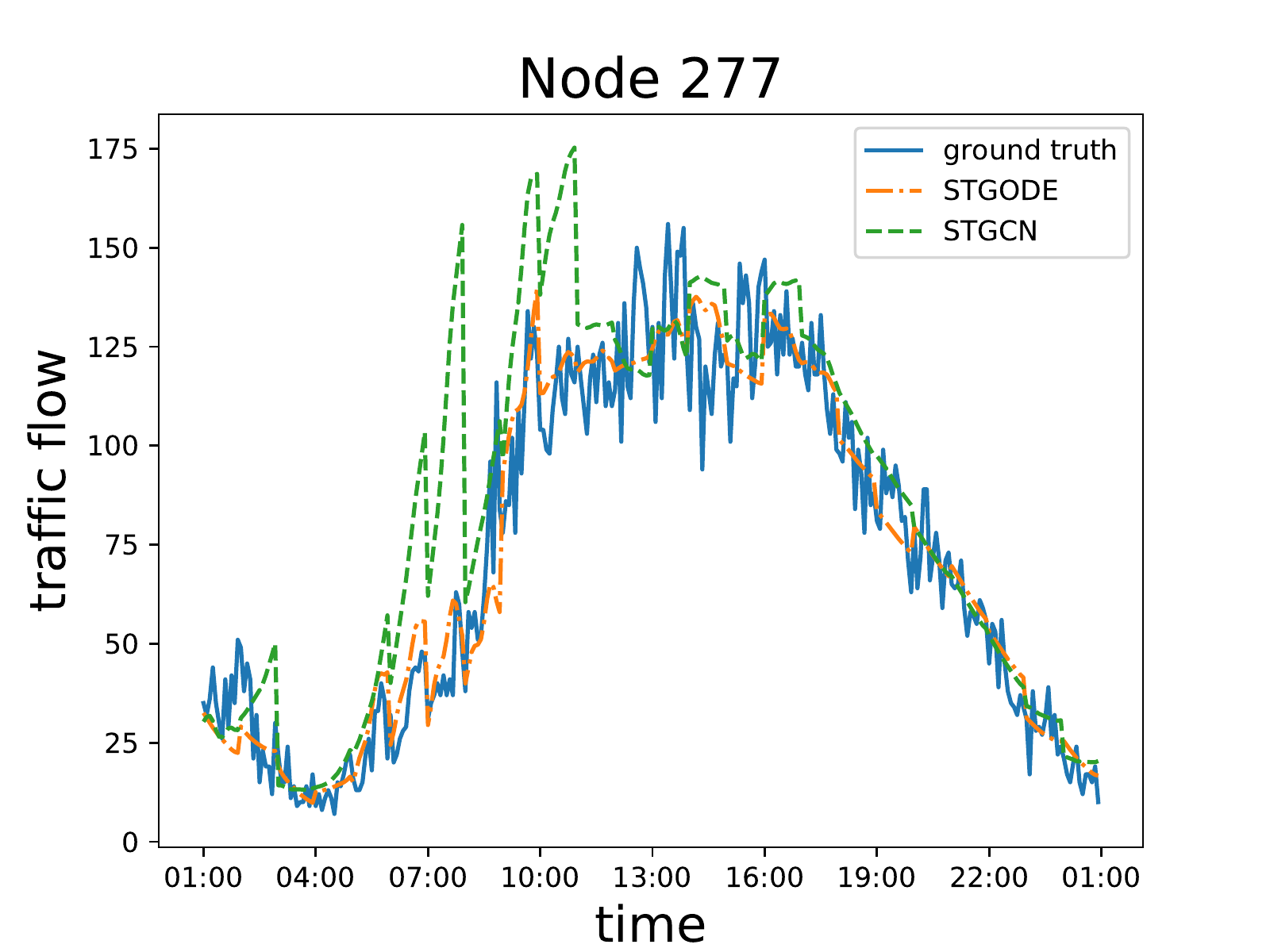}}
  \caption{The comparison of prediction results between our model and STGCN.}
  \label{fig:case}
\end{figure}

\subsection{Model Analysis}
\subsubsection{Ablation Experiments}
To verify the effectiveness of different modules of STGODE, we conduct the following ablation experiments on PeMS04 dataset, and four variants of STGODE are designed.
\begin{itemize}
  \item STGCN*: The ODE solver is replaced with a graph convolution layer to verify the effectiveness of ODE structures for extracting long-range dependencies.
  \item STGODE only spatial: This model does not consider semantic neighbors to verify the necessity of introducing a semantic adjacency matrix.
  \item STGODE-no-h0: The initial state is removed in the derivation of hidden states (Eq \ref{eq:ode}).
  \item STGODE-matrix-based: Reformulate the tensor-based ODE (Eq \ref{eq:ode}) to a matrix-based version as following,
  \begin{equation}\label{matrix-based}
    \frac{\mathrm{d}H(t)}{\mathrm{d}t} = \ln \hat{A} H(t) + H(t) \ln W + H_0
  \end{equation}
   which means that the input tensor will be viewed as multiple matrices separately without considering temporal feature transform in ODE blocks.
\end{itemize}

\begin{figure}[ht]
  \centering
  \includegraphics[width=0.99\linewidth]{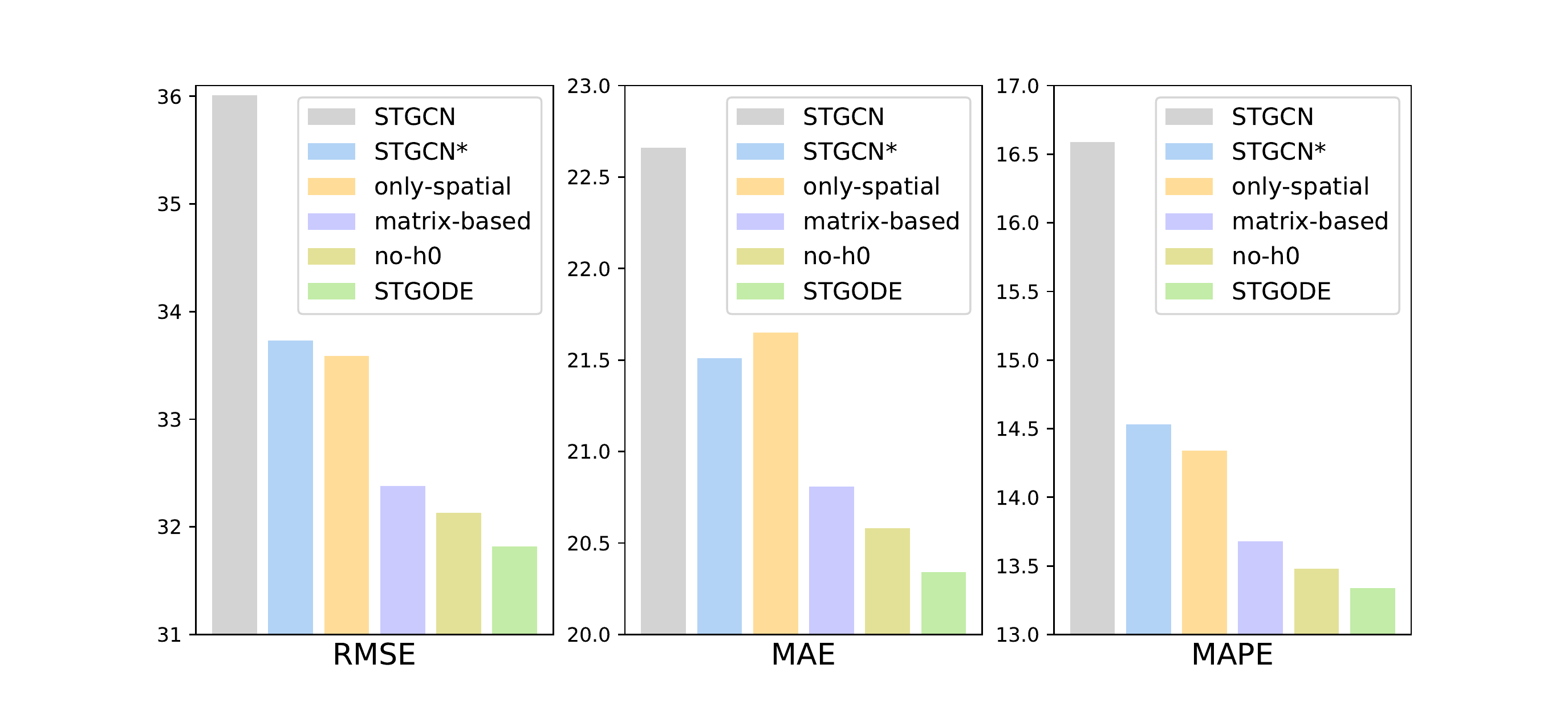}
  \caption{Ablation experiments of STGODE}\label{fig:ablation}
\end{figure}
The results are presented in Fig \ref{fig:ablation}. Here we put STGCN and our STGCN* together on account of their similar sandwich structures and the same way of convolution. The result shows that our STGCN* performs much better than previous STGCN, which is contributed to our novel temporal convolution and the introduction of semantical neighbors, and the poor result of STGODE with only spatial neighbors reinforces the latter point. The performance of the matrix-based version is also inferior to the tensor-based one, as it is incapable to consider spatial-temporal dependency simultaneously. And the result of STGODE without $\mathcal{H}_0$ shows the importance of connecting the initial state.

\subsubsection{Parameter Analysis}
\begin{figure}
  \centering
  \subfigure[]{
  \includegraphics[width=0.45\linewidth]{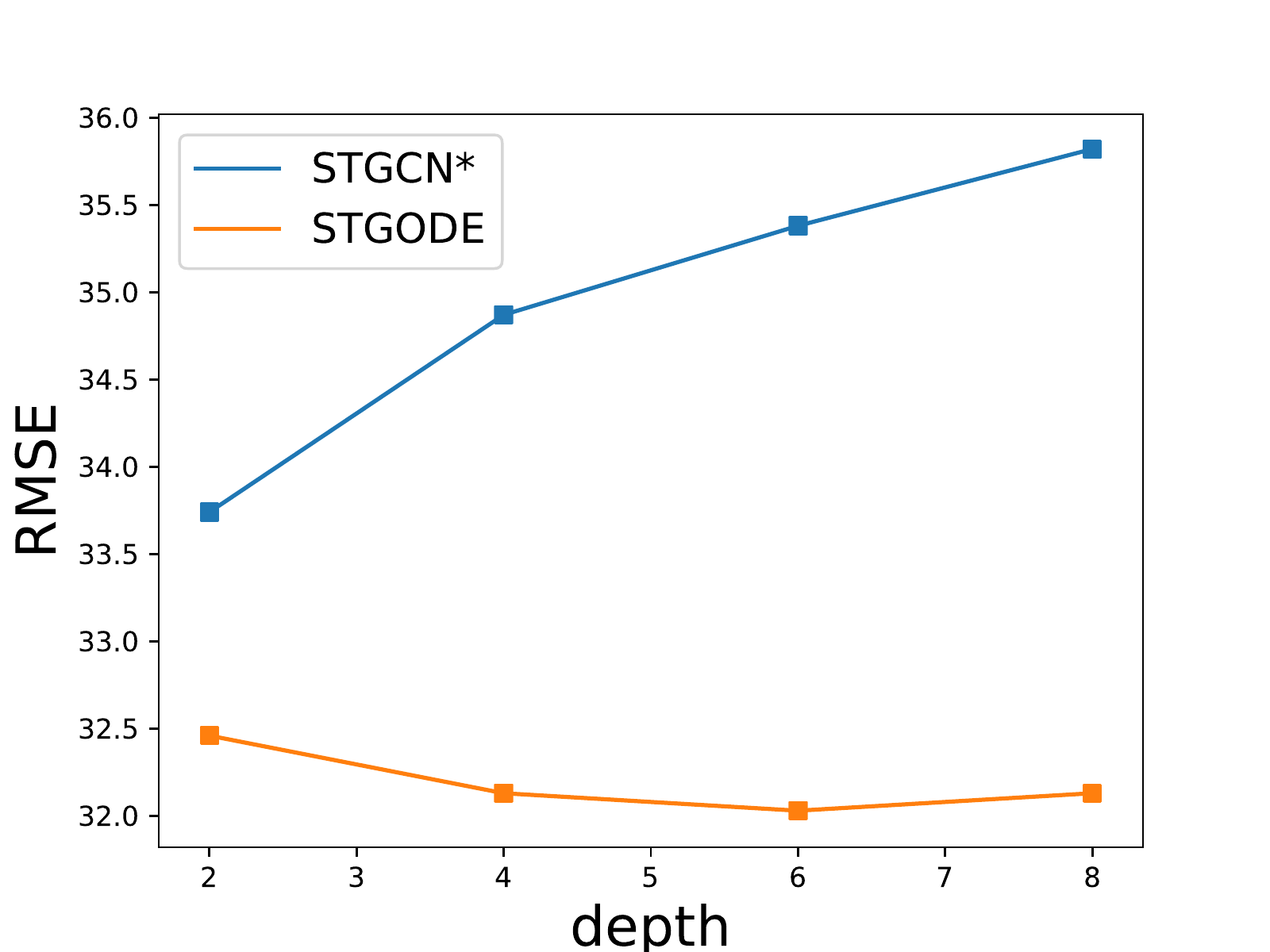}}
  \subfigure[]{
  \includegraphics[width=0.45\linewidth]{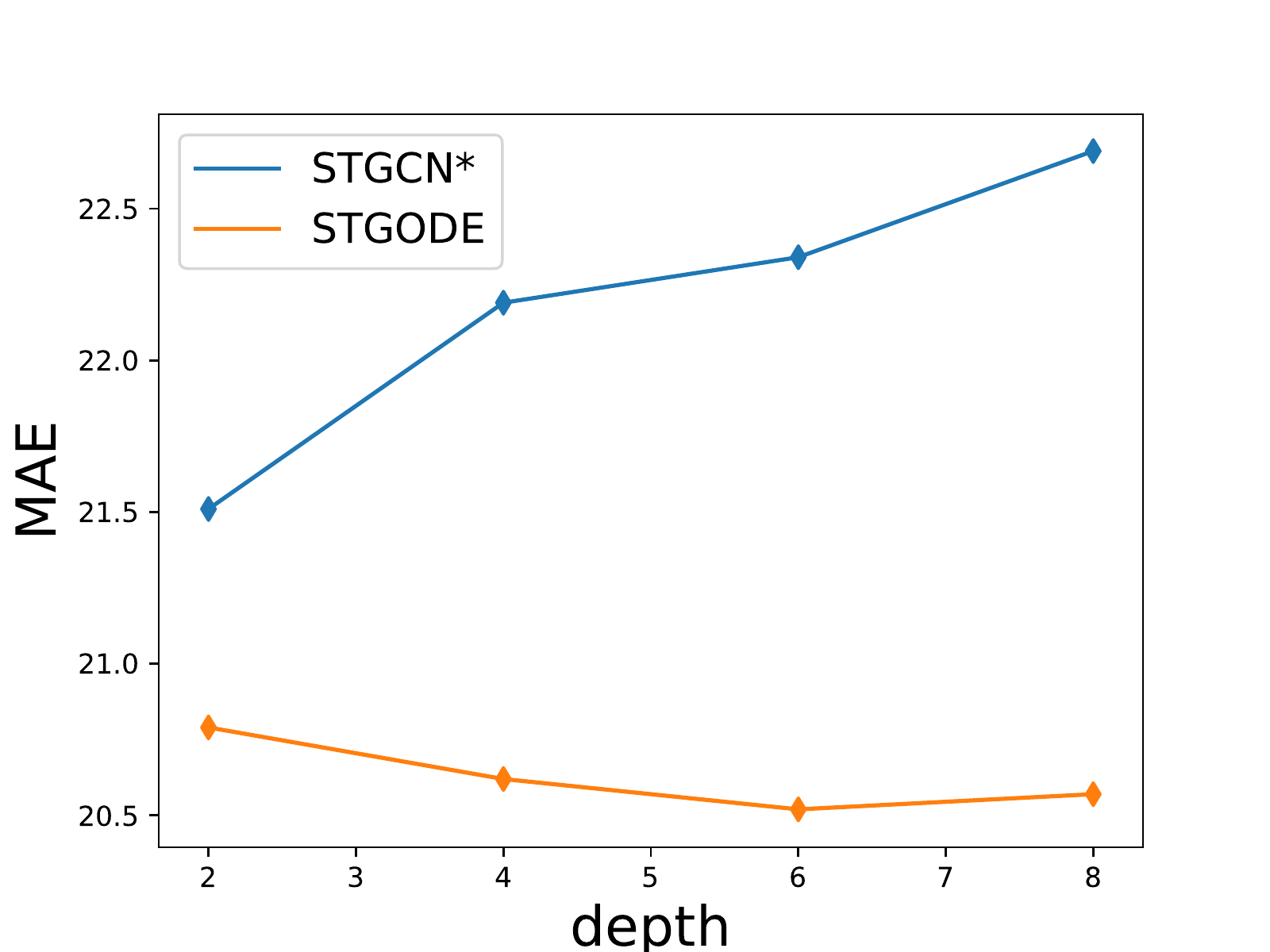}}
  \caption{The performance of STGODE and STGCN when the network depth increasing.}\label{fig:analysis_t}
\end{figure}
One major advantage of our STGODE model over other existing methods is that is robust to the over-smoothing problem and thus capable to construct deeper network structures. Here in Fig \ref{fig:analysis_t}, we represent the performance of STGODE and STGCN* under different depths, i.e. the input time length of STGODE solver and the number of convolution layers in STGCN*. It is easy to see that, as the network depth increases, the performance of STGCN* drops dramatically while the performance of our model is stable, which clearly shows the strong robustness of our model to extract longer-range dependencies.

\section{Conclusion}
A tremendous number of works have been proposed to tackle the complex spatial-temporal problems, but few of them focus on how to extract long-range dependencies without being affected by the over-smoothing problem. In this paper, we present a novel tensor-based spatial-temporal forecasting model named STGODE. To the best of our knowledge, this is the first attempt to bridge continuous differential equations to the node representations of road networks in the area of traffic, which enables to construct deeper networks and leverage wider-range dependencies. Furthermore, the participation of semantic neighbors largely enhances the performance of the model. Extensive experiments prove the effectiveness of STGODE over many existing methods.

\begin{acks}
This work was supported by the National Natural Science Foundation of China (Grant No. 61876006 and No. 61572041).
\end{acks}

\bibliographystyle{ACM-Reference-Format}
\bibliography{main}

\newpage
\section{Appendix}
\subsection{The calculation of the integration in Eq 19}
\begin{proof}
Suppose $\hat{A}-I,U-I,W-I$ have eigenvalue decompositions $P_1\Lambda_1P_1^{-1}, P_2\Lambda_2P_2^{-1}, P_3\Lambda_3P_3^{-1}$ respectively, then we have\begin{align*}
  &\int_{0}^{t}\mathcal{H}_0\times_1 e^{(\hat{A}-I)(t-\tau)}\times_2 e^{(U-I)(t-\tau)} \times_3 e^{(W-I)(t-\tau)}\mathrm{d}\tau \\
  =& \int_{0}^{t}\mathcal{H}_0\times_1 P_1e^{\Lambda_1(t-\tau)} P_1^{-1} \times_2 P_2e^{\Lambda_2(t-\tau)} P_2^{-1} \times_3 P_3e^{\Lambda_3(t-\tau)} P_3^{-1}\mathrm{d}\tau \\
  =& \int_{0}^{t}\mathcal{H}_0\times_1 P_1 \times_2 P_2 \times_3 P_3 \times_1 e^{\Lambda_1(t-\tau)} \times_2 e^{\Lambda_2(t-\tau)}\\ &\times_3 P_3e^{\Lambda_3(t-\tau)} \times_1 P_1^{-1} \times_2 P_2^{-1}\times_3 P_3^{-1} \mathrm{d}\tau,
\end{align*}

denote $\tilde{\mathcal{H}_0} = \mathcal{H}_0\times_1 P_1 \times_2 P_2 \times_3 P_3$, 
\begin{align*}
  &\int_{0}^{t}\mathcal{H}_0\times_1 e^{(\hat{A}-I)(t-\tau)}\times_2 e^{(U-I)(t-\tau)} \times_3 e^{(W-I)(t-\tau)}\mathrm{d}\tau \\
  =& \int_{0}^{t} \tilde{\mathcal{H}_0} \times_1 e^{\Lambda_1(t-\tau)} \times_2 e^{\Lambda_2(t-\tau)} \times_3 e^{\Lambda_3(t-\tau)}\mathrm{d}\tau \times_1 P_1^{-1} \times_2 P_2^{-1}\times_3 P_3^{-1},  \\
\end{align*}
consider the integral element-wise, then we have,  
\begin{align*}
  &\left( \int_{0}^{t} \tilde{\mathcal{H}_0} \times_1 e^{\Lambda_1(t-\tau)} \times_2 e^{\Lambda_2(t-\tau)} \times_3 e^{\Lambda_3(t-\tau)}\mathrm{d}\tau \right)_{ijk}\\
  =& \int_{0}^{t} \tilde{\mathcal{H}}_{0ijk} \times_1 e^{\Lambda_{1ii}(t-\tau)} \times_2 e^{\Lambda_{2jj}(t-\tau)} \times_3 e^{\Lambda_{3kk}(t-\tau)}\mathrm{d}\tau \\
  =& \left. -\frac{1}{\Lambda_{1ii}+\Lambda_{2jj}+\Lambda_{3kk}}\tilde{\mathcal{H}}_{0ijk} \times_1 e^{\Lambda_{1ii}(t-\tau)} \times_2 e^{\Lambda_{2jj}(t-\tau)} \times_3 e^{\Lambda_{3kk}(t-\tau)} \right|_0^t \\
  =& \frac{\tilde{\mathcal{H}}_{0ijk}}{\Lambda_{1ii}+\Lambda_{2jj}+\Lambda_{3kk}} \times_1 e^{\Lambda_{1ii}t} \times_2 e^{\Lambda_{2jj}t} \times_3 e^{\Lambda_{3kk}t} - \frac{\tilde{\mathcal{H}}_{0ijk}}{\Lambda_{1ii}+\Lambda_{2jj}+\Lambda_{3kk}}
\end{align*}
thus, the result of the integration is as the following,
\begin{align*}
  &\int_{0}^{t}\mathcal{H}_0\times_1 e^{(\hat{A}-I)(t-\tau)}\times_2 e^{(U-I)(t-\tau)} \times_3 e^{(W-I)(t-\tau)}\mathrm{d}\tau \\ 
  =& \left(\frac{\tilde{\mathcal{H}}_{0ijk}}{\Lambda_{1ii}+\Lambda_{2jj}+\Lambda_{3kk}} \times_1 e^{\Lambda_{1ii}t} \times_2 e^{\Lambda_{2jj}t} \times_3 e^{\Lambda_{3kk}t} - \frac{\tilde{\mathcal{H}}_{0ijk}}{\Lambda_{1ii}+\Lambda_{2jj}+\Lambda_{3kk}} \right) \\
  &\times_1 P_1^{-1} \times_2 P_2^{-1}\times_3 P_3^{-1}.
\end{align*}

\end{proof}

\end{document}